\documentclass[twoside,11pt]{article}

%

\usepackage{floatrow}
\newfloatcommand{capbtabbox}{table}[][\FBwidth]

\usepackage{amssymb}
\usepackage{amsmath}
\usepackage{graphicx}

\usepackage[utf8]{inputenc} 
\usepackage[T1]{fontenc}    

\usepackage{url}            
\usepackage{booktabs}       
\usepackage{amsfonts}       
\usepackage{nicefrac}       
\usepackage{microtype}      

\usepackage{xcolor}         
\usepackage{pifont}
\usepackage{wrapfig}

\usepackage{mathtools}
\usepackage{tikz}
\usepackage{caption}
\usepackage{subcaption}
\usepackage[ruled, vlined]{algorithm2e} 
\newlength\myindent
\setlength\myindent{2em}

\usepackage{lipsum}



\usepackage{jmlr2e}


\usepackage{xargs}

\usepackage{amssymb}
\usepackage{amsmath}
\usepackage{bm}

\DeclareMathOperator*{\argmax}{arg\,max}

\newcommand{\designnet}{\pi_\phi}

\newcommand{\latent}{\theta}

\newcommandx{\histmarg}[1][1=\designnet]{p(h_T| #1)}
\newcommandx{\histlik}[1][1=\designnet]{p(h_T|\latent, #1)}
\newcommandx{\histliki}[2][2=\designnet]{p(h_T|\latent_{#1}, #2)}



\newcommand{\Do}[1]{\text{do}(#1)} 

\newcommand{\R}{\mathbb{R}}
\newcommand{\E}{\mathbb{E}}





\def\rva{{\mathbf{a}}}

\def\rvc{{\mathbf{c}}}

\def\rvm{{\mathbf{m}}}

\def\rvy{{\mathbf{y}}}

\def\rvA{{\mathbf{A}}}
\def\rvC{{\mathbf{C}}}






\newcommand{\unconditionaljoint}{p(\rvy, \rvm^* | \rvC, \rvC^*, \Do{\rvA})}
\newcommand{\unconditionaljointzero}{p(\rvy, \rvm^*_0 | \rvC, \rvC^*, \Do{\rvA})}


\jmlrheading{1}{}{}{}{}{}


\firstpageno{1}

\begin{document}

\title{Efficient Real-world Testing of Causal Decision Making via Bayesian Experimental Design for Contextual Optimisation}

\author{\name Desi R. Ivanova\normalfont{\textsuperscript{\textdagger}} \email desi.ivanova@stats.ox.ac.uk \\
       \addr Department of Statistics, University of Oxford
       \AND
       \name Joel Jennings \email joeljennings@microsoft.com \\
       \addr Microsoft Research Cambridge
       \AND
       \name Cheng Zhang \email chezha@microsoft.com \\
       \addr Microsoft Research Cambridge
       \AND
       \name Adam Foster \email adam.e.foster@microsoft.com \\
       \addr Microsoft Research Cambridge \\
       \normalfont{\textsuperscript{\textdagger}Work done during an internship at Microsoft Research Cambridge}
       }


\maketitle

\begin{abstract}
The real-world testing of decisions made using causal machine learning models is an essential prerequisite for their successful application. 
We focus on evaluating and improving contextual treatment assignment decisions: these are personalised treatments applied to e.g.~customers, each with their own contextual information, with the aim of maximising a reward.
In this paper we introduce a model-agnostic framework for gathering data to evaluate and improve contextual decision making through Bayesian Experimental Design.
Specifically, our method is used for the data-efficient evaluation of  the \emph{regret} of past treatment assignments.
Unlike approaches such as A/B testing, 
our method avoids assigning treatments that are known to be highly sub-optimal, whilst engaging in some exploration to gather pertinent information. 
We achieve this by 
introducing an information-based design objective, which we optimise end-to-end.
Our method applies to discrete and continuous treatments.
Comparing our information-theoretic approach to baselines in several simulation studies demonstrates the superior performance of our proposed approach.


\end{abstract}

\begin{keywords}
  Real-world testing, Bayesian experimental design, causal decision making evaluation, contextual optimisation, mutual information.
\end{keywords}

\section{Introduction}\label{sec:introduction}

Machine learning models, particularly causal machine learning models, can be used to make real-world decisions \citep{joyce1999foundations,geffner2022deep}. 
We consider the problem of collecting experimental data to evaluate such models and improve decisions made in future.
For example, suppose a company has developed a model describing the relationship between customer engagement and revenue. 
This model will be used to offer personalised promotions to each customer in order to maximise some reward, e.g.~revenue.   
Before deploying it to all of its customers, 
the company may wish to collect experimental data to validate the model.

To formalise this, we consider the contextual optimisation problem in which a treatment or treatments must be assigned to each context to receive a reward.
We focus on the experimental design problem of finding treatments that allow us to evaluate the regret of past actions and to improve decisions made in the future.
Our aim, then, is to choose actions to test the model; we are not directly optimising rewards in the experimental phase.
In line with our real-world scenario, we focus on designing \emph{large batch} experiments, in which we must select a large number of treatments for different contexts before receiving any feedback.

A standard experimental design procedure is random treatment assignment, which corresponds to A/B testing when treatment is binary \citep{fisher1936design,kohavi2017online}.
However, this naive approach has significant drawbacks, as it may involve applying many sub-optimal actions, meaning that the \emph{opportunity cost of experimentation} becomes high and experimental resources are wasted applying treatments that are known \emph{a priori} to be sub-optimal. On the other hand, applying the best treatments based on existing beliefs---a pure exploitation strategy---may also be uninformative for model testing as it fails to account for uncertainty and the need to gather data where uncertainty is high. 

To design an experiment in a more principled way, we turn to the framework of Bayesian experimental design (BED) \citep{lindley1956,chaloner1995,foster2019variational}. In contrast to standard BED, which maximises the information gained about all model parameters, we define the objective for experimental design as the \emph{expected information gained about the maximum reward obtainable for a set of evaluation contexts (max-value EIG)}.
This directly addresses the problem of gaining data to evaluate the regret of past decisions, because, given a past outcome, the unknown component of regret is \emph{how much better the reward could have been}. The formulation is also relevant to the problem of gathering data (exploration) to obtain better rewards in later interactions (exploitation). 

As a model-based framework, BED requires the specification of a Bayesian model \citep{gelman2013bayesian} relating contexts, treatments and rewards. 
Since the model makes predictions about interventional distributions, causality plays a central role in selecting a correct model, particularly when the model is pre-trained with observational data.
One approach is to rely on assumptions about the causal graph \citep{pearl2009causal,sharma2021dowhy}.
Alternatively, the model can incorporate Bayesian uncertainty about the causal structure \citep{heckerman1999bayesian,annadani2021variational,geffner2022deep}.
In this paper, we propose a model-agnostic approach to experimental design that accommodates this full range of Bayesian models; we only require \emph{differentiability} of samples from the model.

To find the optimal experimental design for a given Bayesian model, we first show that the InfoNCE bound \citep{oord2018representation} can be used to estimate the max-value EIG objective.
We then optimise the testing treatment assignment (experimental design) by gradients \citep{foster2020unified,kleinegesse2020minebed}.
Our method is applicable to multiple treatments, both continuous and discrete, using carefully tuned Gumbel--Softmax \citep{maddison2016concrete,jang2016categorical} for the latter case.
We show experimentally that our approach is effective on a number of synthetic models, learning intuitively correct large batch designs that outperform standard baselines by significant margins.

\section{Problem set-up} 
We are interested in the efficient evaluation of the regret of past treatment assignment decisions.
We formalise this by considering a contextual optimisation problem in which treatment, $\rva$, is applied in context, $\rvc$, to receive reward $y$.
For example, in the customer engagement scenario, $\rvc$ would be the information we have about the customer that cannot be directly acted upon, such as age, income and historical data of past interactions, $\rva$ would denote a particular choice of promotions that we offer to the customer, and the reward $y$ would be the future revenue generated by the customer, net of promotion costs. 


\paragraph{Expected regret}  We are interested in gathering data to evaluate the \emph{expected regret} of the past action $\rva$ performed in a context of interest $\rvc^*$. This is defined as
\begin{equation}
    R(\rva, \rvc^*) = m(\rvc^*) -  \E[y|\Do{\rva},\rvc^*], \label{eq:regret}
\end{equation}
where $p(\cdot|\Do{\rva},\rvc^*)$ denotes the interventional distribution when action $\rva$ is applied to context $\rvc^*$ and $m(\rvc^*) = \max_{\rva^{\prime}} \E[y | \Do{\rva^{\prime}},\rvc^*]$ denotes the best obtainable outcome for context $\rvc^*$.
Suppose that $\rva$ has been performed historically to $\rvc^*$, and we wish evaluate the regret of this treatment. 
The primary obstacle to evaluating this regret in the real world is that $m(\rvc^*)$ is not known, i.e.~we do not know how much better the outcome could have been if a different treatment had been applied. 
We therefore wish to obtain experimental data, possibly from different contexts, that will help us to efficiently infer $m(\rvc^*)$.
Gathering data to learn $m(\rvc^*)$ also aids in choosing the best treatment in \emph{future} interactions with $\rvc^*$.



\paragraph{Bayesian model} In general, we also do not know the true interventional distribution $p(y | \Do{\rva}, \rvc)$.
We therefore model the relationship between $\rva,\rvc$ and $y$ is a Bayesian manner, by introducing a Bayesian parameter $\psi$ with a prior $p(\psi)$.
Different samples of $\psi$ corresponds to different hypotheses $p(y|\text{do}(\rva),\rvc,\psi)$, whilst marginalising over $\psi$ summarises the total Bayesian uncertainty in $y|\text{do}(\rva),\rvc$.
Figure~\ref{fig:stylized_example} illustrates an example with a binary treatment.


\paragraph{The role of causality} 
Causal considerations are central to building a correct model of the interventional distribution $p(y|\Do{\rva},\rvc)$, particularly when the prior $p(\psi)$ is fitted by conditioning on past \emph{observational} data.
By making certain assumptions about the causal graph and the (non-)existence of unobserved confounders, one can reduce the problem to that of learning specific functions \citep{pearl2009causal,sharma2021dowhy}.
These assumptions are often made implicitly in the optimisation literature \citep{bareinboim2015bandits}.
Through suitable assumptions, any Bayesian model that predicts $y$ from $\rva,\rvc$ can be used as a model for $p(y|\Do{\rva},\rvc)$, so a causal treatment of the problem does not restrict the class of models that can be considered.
Alternatively, the Bayesian parameter $\psi$ can incorporate uncertainty in the causal graph \citep{heckerman1999bayesian,annadani2021variational,geffner2022deep}.
Our approach to experimental design is concordant with either of these modelling choices.

\begin{figure}[t]
\vspace{-20pt}
\centering
    \includegraphics[width=0.78\textwidth]{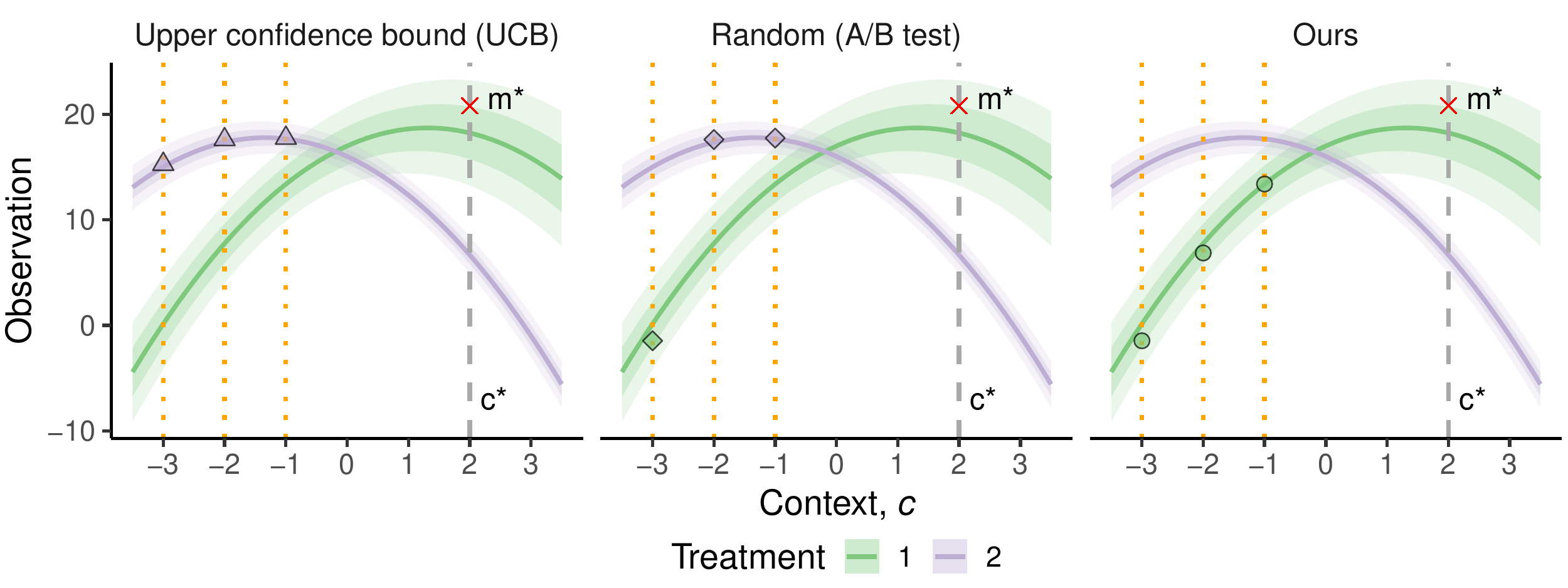}
    \vspace{-10pt}
    \caption{An example with a scalar context and a binary treatment $\rva \in \{1,2\}$.
    The dark (resp. light) shaded area shows our uncertainty about $y|\Do{\rva},\rvc$ arising from uncertainty in $\psi$, measured by one (resp. two) st.~dev.~from the mean.
    We want to estimate $m^*=m(\rvc^*)$, the unknown maximum achievable reward at the evaluation context $\rvc^*$.
    We perform three experiments at $\rvc_1, \rvc_2, \rvc_3$ (dotted orange lines). 
    UCB chooses the reward-maximising treatments in each experimental context, bringing no information about $m^*$, as Treatment 2 is \emph{a priori} suboptimal at $\rvc^*$.
    Random design selects Treatments 1 and 2 with equal probability;
    our method selects treatments whose outcomes will be most informative about~$\rvm^*$ given the prior.
    \vspace{-18pt}
    }
    \label{fig:stylized_example}
\end{figure}
 
\section{Bayesian Experimental Design for Contextual Optimisation}

Whilst standard BED would design experiments to maximise the information gathered about the Bayesian parameter $\psi$, our goal is to efficiently evaluate the regret~\eqref{eq:regret}. This translates to estimating the maximum achievable rewards for a set of $D^*$ evaluation contexts $\rvC^* = \rvc^*_1,\dots,\rvc^*_{D^*}$, denoted $\rvm^* = m(\rvc^*_1),\dots,m(\rvc^*_{D^*})$.
In the customer engagement example, $\rvC^*$ represents a set of customers for which we are aiming to infer the best possible rewards $\rvm^*$.
To learn about $\rvm^*$, we are allowed to perform a batch of $D$ experiments at contexts $\rvC = \rvc_1,\dots,\rvc_D$.
In the customer engagement example, $\rvC$ represents the set of customers that will participate in the real-world test.
The \emph{experimental design} problem is to select the treatments $\rvA = \rva_1,\dots,\rva_D$ to apply to $\rvC$ so that the experimental outcomes $\rvy = y_1,\dots,y_D $ that we get from the real-world test are maximally informative about $\rvm^*$.
More formally, we design $\rvA$ to maximize expected information gain 
between $\rvy$ and $\rvm^*$ 
\begin{align}
    \mathcal{I}(\rvA; \rvC, \rvC^*) %
    &= \E_{\unconditionaljoint}  
    \left[ 
        \log \frac{  p(\rvy |\rvm^*, \Do{\rvA},  \rvC) }{ p(\rvy|\Do{\rvA},  \rvC) }
    \right], \label{eq:mi_objective}
\end{align}
where $p(\rvy|\Do{\rvA},  \rvC) = \E_{p(\psi)} \big[ p(\rvy|\Do{\rvA},  \rvC, \psi) \big]$, $p(\rvm^* | \rvC^*) = \E_{p(\psi)} \big[p(\rvm^* | \rvC^*, \psi)\big]$ and $\unconditionaljoint = \E_{p(\psi)} \big[ p(\rvm^* | \rvC^*, \psi) p(\rvy | \rvC, \Do{\rvA}, \psi) \big]$. 
We refer to $\mathcal{I}(\rvA; \rvC, \rvC^*)$ as the max-value EIG.
The objective is doubly intractable \citep{rainforth2018nesting} meaning that its optimisation, or even evaluation with a fixed $\rvA$, is a major challenge. The likelihoods in the expectation are \emph{implicit} (not available analytically) making the problem more difficult. 

\subsection{Training stage: Objective estimation and optimization}\label{sec:training_stage}

Recently, gradient-based approaches for optimising EIG 
have been developed for experimental design for parameter learning \citep{foster2020unified, kleinegesse2020minebed} as opposed to contextual optimisation, which is the focus of this work. 
Inspired by likelihood-free mutual information estimators, we show the InfoNCE bound \citep{oord2018representation} can be adapted to the contextual optimisation setting we consider here. Concretely,
\begin{equation}
    \mathcal{L}(\rvA, U; \rvC, \rvC^*, L) = \E_{p(\psi)p(\rvy, \rvm^*_0 | \Do{\rvA},  \rvC, \rvC^*, \psi)p(\rvm^*_{1:L}| \rvC^*)} 
    \left[ 
        \log \frac{  \exp(U(\rvy , \rvm^*_0))}{ \frac{1}{L+1} \sum_\ell \exp(U(\rvy , \rvm^*_\ell)) }
    \right] \label{eq:infonce_bound}
\end{equation}
is a lower bound on~$\mathcal{I}(\rvA;\rvC,\rvC^*)$ for any $L \geq 1$ and any critic function $U$. We provide a proof of this claim in Appendix~\ref{sec:appendix_proofs}. To learn a suitable critic $U$, we use a neural network with trainable parameters $\phi$.
We then optimise the lower bound $\mathcal{L}(\rvA, U; \rvC, \rvC^*, L)$ simultaneously with respect to $\phi$ and $\rvA$, thereby improving the tightness of the bound and optimising the experimental design together.
When the actions $\rvA$ are continuous, we require that a sample $y$ of $p(y|\Do{\rva},\rvc,\psi)$ is differentiable with respect to $\rva$. 
In this case, a pathwise gradient estimator \citep{mohamed2020monte} for $\nabla_{\rvA, \phi}\mathcal{L}$ can be readily computed and we apply standard stochastic gradient ascent (SGA) \citep{robbins1951stochastic} to optimise our objective~\eqref{eq:infonce_bound}.




\paragraph{Discrete action space}
Previous gradient-based BED work has focused on fully differentiable models. 
Here we propose a practical way to deal with discrete designs. 
Rather than learning the treatments $\rvA$ directly, we introduce a stochastic treatment policy $\pi_{\bm{\alpha}}$ with trainable parameters $\bm{\alpha}$ representing the probabilities of selecting each treatment. 
During training we use a Gumbel--Softmax \citep{maddison2016concrete, jang2016categorical} relaxed form of $\rvA$ and update $\bm{\alpha}$ by gradients.
Further detail is provided in Appendix~\ref{sec:appendix_proofs}.

\subsection{Deployment stage: Real-world testing}\label{sec:deployment}
Once we optimise the objective~\eqref{eq:infonce_bound}, we perform the batch of experiments $\rvA$ to obtain real-world outcomes $\rvy$ and estimate $p(\psi|\mathcal{D})$---the posterior of our Bayesian model, given the experimental data $\mathcal{D}=(\rvy, \rvA, \rvC)$.
We then calculate the posterior $p(\rvm^*|\rvC^*,\mathcal{D})$, and this it to estimate the regret~\eqref{eq:regret} of a set of past treatments.
Importantly, through our design of $\rvA$, the data contained in $\mathcal{D}$ should lead to the \emph{most accurate} estimate of $\rvm^*$, which translates to an accurate estimate of the regret and an accurate evaluation metric for the original decisions.
%
Algorithm \ref{algo:method} in the Appendix gives a complete description of our method. 

\section{Related Work}
The most closely related objective to our max-value EIG \eqref{eq:mi_objective} is Max-value Entropy Search \citep[MES,][]{wang2017max}. It was proposed as a computationally efficient alternative to Entropy Search \citep[ES,][]{hennig2012entropy} and Predictive Entropy Search \citep[PES, ][]{hernandez2014predictive}. Our objective 
differs from MES in two essential ways: firstly, MES is applicable to Gaussian processes (GPs), whilst our objective is model-agnostic; secondly, we focus on \emph{contextual} optimisation, rather than finding a single maximiser. For a more comprehensive discussion of related work, see Appendix~\ref{sec:app:relwork}.

\section{Empirical Evaluation}

\begin{wrapfigure}[7]{r}{0.29\textwidth}
\vspace{-25pt}

\tikzset{every picture/.style={line width=0.75pt}} 

\begin{tikzpicture}[x=0.75pt,y=0.75pt,yscale=-1,xscale=1]

\draw   (123.38,152.05) .. controls (129.18,146.14) and (138.68,146.06) .. (144.59,151.86) .. controls (150.5,157.67) and (150.59,167.16) .. (144.78,173.07) .. controls (138.98,178.98) and (129.48,179.07) .. (123.57,173.27) .. controls (117.66,167.46) and (117.57,157.96) .. (123.38,152.05) -- cycle ;
\draw   (76.09,200.92) .. controls (81.9,195.01) and (91.39,194.92) .. (97.3,200.73) .. controls (103.21,206.53) and (103.3,216.03) .. (97.5,221.94) .. controls (91.69,227.85) and (82.19,227.94) .. (76.28,222.13) .. controls (70.37,216.33) and (70.29,206.83) .. (76.09,200.92) -- cycle ;
\draw   (26.86,152.58) .. controls (32.66,146.67) and (42.16,146.58) .. (48.07,152.38) .. controls (53.98,158.19) and (54.07,167.69) .. (48.26,173.6) .. controls (42.46,179.51) and (32.96,179.59) .. (27.05,173.79) .. controls (21.14,167.98) and (21.06,158.49) .. (26.86,152.58) -- cycle ;
\draw    (116.25,162.24) -- (52.76,162.24) ;
\draw [shift={(119.25,162.24)}, rotate = 180] [fill={rgb, 255:red, 0; green, 0; blue, 0 }  ][line width=0.08]  [draw opacity=0] (7.14,-3.43) -- (0,0) -- (7.14,3.43) -- cycle    ;
\draw    (48.26,173.6) -- (73.95,198.82) ;
\draw [shift={(76.09,200.92)}, rotate = 224.48] [fill={rgb, 255:red, 0; green, 0; blue, 0 }  ][line width=0.08]  [draw opacity=0] (7.14,-3.43) -- (0,0) -- (7.14,3.43) -- cycle    ;
\draw    (123.57,173.27) -- (99.38,198.56) ;
\draw [shift={(97.3,200.73)}, rotate = 313.73] [fill={rgb, 255:red, 0; green, 0; blue, 0 }  ][line width=0.08]  [draw opacity=0] (7.14,-3.43) -- (0,0) -- (7.14,3.43) -- cycle    ;
\draw (37.56,163.09) node    {$\rvc$};
\draw (134.08,162.56) node    {$\rva$};
\draw (86.79,211.43) node    {$y$};

\end{tikzpicture}
\caption{
Causal graph.
}
\label{fig:scr_graph}
\end{wrapfigure}
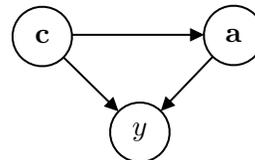
We test the efficacy of our method on several synthetic experiments. 
In all synthetic experiments, we assume the causal graph is as shown in Figure~\ref{fig:scr_graph}, but note that our method is not restricted to this graph structure. 
We compare our method against random treatment assignment and the Upper Confidence Bound (UCB) \citep{auer2002using} algorithm. 
Our key evaluation metrics are \textbf{EIG} and \textbf{accuracy} of inferring $\rvm^*$. For the former we evaluate~\eqref{eq:infonce_bound} with the trained critic and learnt designs. For the latter, we sample a realisation $\tilde{\psi}$ and $\tilde{\rvm}^*$ from the prior, treating it as the ground truth environment in the Deployment stage of our framework (\S~\ref{sec:deployment}). 
We report the mean squared error (MSE) between our estimate $\rvm^*$ and the ground truth $\tilde{\rvm}^*$. We report three additional metrics---average regret, MSE of estimating $\tilde{\psi}$, and deviation from the true optimal treatment. Full details about the models, training, and further results are given in Appendix~\ref{sec:appendix_experiments}.

\begin{figure}[t]
  \vspace{-15pt}
  \centering
  \includegraphics[width=0.92\textwidth]{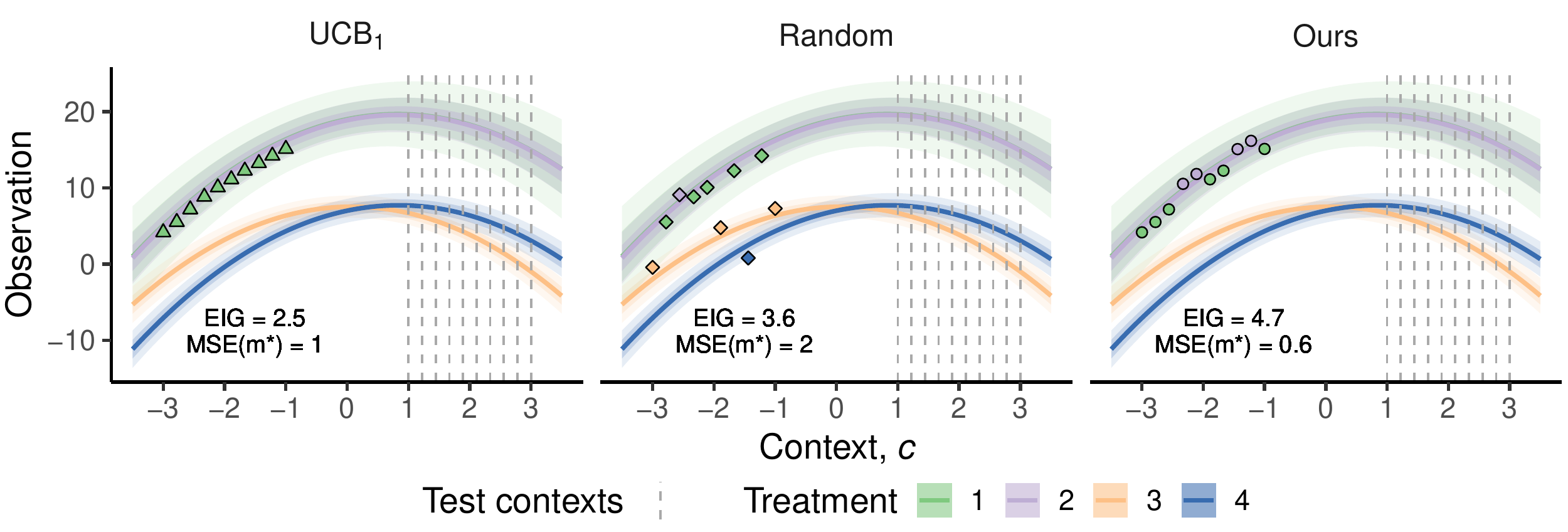}
\vspace{-15pt}
\caption{Discrete treatments: optimal designs and key metrics. More details in Appendix~\ref{sec:appendix_experiments}. 
\vspace{-20pt}
}
\label{fig:exp_discrete_ab}
\end{figure}
\paragraph{Discrete treatments}
We consider a model with four treatment options, two of which are \emph{a priori} sub-optimal; the other two have the same mean but Treatment 1 has higher variance than Treatment 2 (Figure~\ref{fig:exp_discrete_ab}).
Intuitively, the optimal strategy would be to A/B test only the top 2 treatments, which is exactly what our method has learnt. The random method is wasteful as it queries the sub-optimal treatments, while $\text{UCB}_1$ only ever queries Treatment 1. As a result, our method is much more accurate at estimating the true $\rvm^*$.

\paragraph{Continuous treatments}
We test our method on a continuous treatment problem, where we design a batch of 40 experiments to learn about the max-values at 39 evaluation contexts. As Table~\ref{tab:cts_40d_main} shows, our method outperforms the baselines on all metrics. Although our method is optimised to accurately estimate $\tilde{\rvm}^*$, it is also significantly better at estimating $\tilde{\psi}$, which in turn translates to choosing better treatments in the future, as the lower $\text{MSE}(\rvA)$ indicates.
\begin{table}[t]
\resizebox{0.95\textwidth}{!}{
\begin{tabular}{lccccc}
Method             & EIG estimate      & MSE$(\rvm^*)$       & MSE$(\psi)$       & MSE$(\rvA)$       & Regret    \\
\toprule
Random$_{0.2}$     & 5.407 $\pm$ 0.003 & 0.0041 $\pm$ 0.0001 & 0.0114 $\pm$ 0.0003 & 0.544 $\pm$ 0.023 & 0.091 $\pm$ 0.002 \\
Random$_{1.0}$     & 5.798 $\pm$ 0.004 & 0.0024 $\pm$ 0.0002 & 0.0053 $\pm$ 0.0002 & 0.272 $\pm$ 0.018 & 0.060 $\pm$ 0.002  \\
Random$_{2.0}$     & 4.960 $\pm$ 0.004  & 0.0042 $\pm$ 0.0002 & 0.0102 $\pm$ 0.0003 & 0.450 $\pm$ 0.021  & 0.090 $\pm$ 0.002  \\
$\text{UCB}_{0}$ & 5.774 $\pm$ 0.003 & 0.0069 $\pm$ 0.0005 & 0.0154 $\pm$ 0.0007 & 0.747 $\pm$ 0.055 & 0.082 $\pm$ 0.002 \\
$\text{UCB}_{1}$ & 5.876 $\pm$ 0.003 & 0.0030 $\pm$ 0.0002  & 0.0102 $\pm$ 0.0004 & 0.338 $\pm$ 0.024 & 0.067 $\pm$ 0.002 \\
$\text{UCB}_{2}$ & 5.780 $\pm$ 0.004  & 0.0031 $\pm$ 0.0002 & 0.0093 $\pm$ 0.0003 & 0.378 $\pm$ 0.031 & 0.069 $\pm$ 0.002 \\
\midrule
\textbf{Ours}               & \textbf{6.527} $\pm$ \textbf{0.003} &\textbf{ 0.0014} $\pm$ \textbf{0.0001} & \textbf{0.0039} $\pm$ \textbf{0.0002} & \textbf{0.143} $\pm$ \textbf{0.018} &\textbf{ 0.044} $\pm$\textbf{ 0.001}
\vspace{-15pt}
\end{tabular}
}
\caption{Continuous treatments: evaluation of 40D design. More details in Appendix~\ref{sec:appendix_experiments}.} 
\label{tab:cts_40d_main}
\end{table}

\vspace{-10pt}
\section{Discussion and Conclusion}
By combining ideas from Bayesian experimental design, contextual bandits and Bayesian optimisation, we proposed a method for real-world model testing by regret evaluation of past context-dependent treatment assignments. 
Our method also applies to gathering data to improve future decision making, in fact, we saw that gaining information for regret evaluation and for model improvement are really two sides of the same coin.
To compute large batch designs efficiently, we cast the problem in terms of implicit likelihood BED, resulting in a model agnostic approach that is able to handle both discrete and continuous designs.

Future work may introduce the ability to handle constraints, and explore the scaling of our method to high dimensional treatments and contexts, both continuous and discrete.
Our method could also be extended to apply to sequential, adaptive experimentation. 


\newpage
\vskip 0.2in
\bibliography{refs}

\begin{thebibliography}{37}
\providecommand{\natexlab}[1]{#1}
\providecommand{\url}[1]{\texttt{#1}}
\expandafter\ifx\csname urlstyle\endcsname\relax
  \providecommand{\doi}[1]{doi: #1}\else
  \providecommand{\doi}{doi: \begingroup \urlstyle{rm}\Url}\fi

\bibitem[Agrawal and Goyal(2013)]{agrawal2013thompson}
Shipra Agrawal and Navin Goyal.
\newblock Thompson sampling for contextual bandits with linear payoffs.
\newblock In \emph{International conference on machine learning}, pages
  127--135. PMLR, 2013.

\bibitem[Annadani et~al.(2021)Annadani, Rothfuss, Lacoste, Scherrer, Goyal,
  Bengio, and Bauer]{annadani2021variational}
Yashas Annadani, Jonas Rothfuss, Alexandre Lacoste, Nino Scherrer, Anirudh
  Goyal, Yoshua Bengio, and Stefan Bauer.
\newblock Variational causal networks: Approximate bayesian inference over
  causal structures.
\newblock \emph{arXiv preprint arXiv:2106.07635}, 2021.

\bibitem[Auer(2002)]{auer2002using}
Peter Auer.
\newblock Using confidence bounds for exploitation-exploration trade-offs.
\newblock \emph{Journal of Machine Learning Research}, 3\penalty0
  (Nov):\penalty0 397--422, 2002.

\bibitem[Bareinboim et~al.(2015)Bareinboim, Forney, and
  Pearl]{bareinboim2015bandits}
Elias Bareinboim, Andrew Forney, and Judea Pearl.
\newblock Bandits with unobserved confounders: A causal approach.
\newblock \emph{Advances in Neural Information Processing Systems}, 28, 2015.

\bibitem[Bingham et~al.(2018)Bingham, Chen, Jankowiak, Obermeyer, Pradhan,
  Karaletsos, Singh, Szerlip, Horsfall, and Goodman]{pyro}
Eli Bingham, Jonathan~P Chen, Martin Jankowiak, Fritz Obermeyer, Neeraj
  Pradhan, Theofanis Karaletsos, Rohit Singh, Paul Szerlip, Paul Horsfall, and
  Noah~D Goodman.
\newblock Pyro: Deep universal probabilistic programming.
\newblock \emph{Journal of Machine Learning Research}, 2018.

\bibitem[Chaloner and Verdinelli(1995)]{chaloner1995}
Kathryn Chaloner and Isabella Verdinelli.
\newblock {Bayesian experimental design: A review}.
\newblock \emph{Statistical Science}, pages 273--304, 1995.

\bibitem[Char et~al.(2019)Char, Chung, Neiswanger, Kandasamy, Nelson, Boyer,
  Kolemen, and Schneider]{char2019offline}
Ian Char, Youngseog Chung, Willie Neiswanger, Kirthevasan Kandasamy, Andrew~O
  Nelson, Mark Boyer, Egemen Kolemen, and Jeff Schneider.
\newblock Offline contextual bayesian optimization.
\newblock \emph{Advances in Neural Information Processing Systems}, 32, 2019.

\bibitem[Chu et~al.(2011)Chu, Li, Reyzin, and Schapire]{chu2011contextual}
Wei Chu, Lihong Li, Lev Reyzin, and Robert Schapire.
\newblock Contextual bandits with linear payoff functions.
\newblock In \emph{Proceedings of the Fourteenth International Conference on
  Artificial Intelligence and Statistics}, pages 208--214. JMLR Workshop and
  Conference Proceedings, 2011.

\bibitem[Fisher(1936)]{fisher1936design}
Ronald~Aylmer Fisher.
\newblock Design of experiments.
\newblock \emph{British Medical Journal}, 1\penalty0 (3923):\penalty0 554,
  1936.

\bibitem[Foster et~al.(2019)Foster, Jankowiak, Bingham, Horsfall, Teh,
  Rainforth, and Goodman]{foster2019variational}
Adam Foster, Martin Jankowiak, Elias Bingham, Paul Horsfall, Yee~Whye Teh,
  Thomas Rainforth, and Noah Goodman.
\newblock {Variational Bayesian Optimal Experimental Design}.
\newblock In \emph{Advances in Neural Information Processing Systems 32}, pages
  14036--14047. Curran Associates, Inc., 2019.

\bibitem[Foster et~al.(2020)Foster, Jankowiak, O’Meara, Teh, and
  Rainforth]{foster2020unified}
Adam Foster, Martin Jankowiak, Matthew O’Meara, Yee~Whye Teh, and Tom
  Rainforth.
\newblock A unified stochastic gradient approach to designing bayesian-optimal
  experiments.
\newblock In \emph{International Conference on Artificial Intelligence and
  Statistics}, pages 2959--2969. PMLR, 2020.

\bibitem[Geffner et~al.(2022)Geffner, Antoran, Foster, Gong, Ma, Kiciman,
  Sharma, Lamb, Kukla, Pawlowski, et~al.]{geffner2022deep}
Tomas Geffner, Javier Antoran, Adam Foster, Wenbo Gong, Chao Ma, Emre Kiciman,
  Amit Sharma, Angus Lamb, Martin Kukla, Nick Pawlowski, et~al.
\newblock Deep end-to-end causal inference.
\newblock \emph{arXiv preprint arXiv:2202.02195}, 2022.

\bibitem[Gelman et~al.(2013)Gelman, Carlin, Stern, Dunson, Vehtari, and
  Rubin]{gelman2013bayesian}
Andrew Gelman, John~B Carlin, Hal~S Stern, David~B Dunson, Aki Vehtari, and
  Donald~B Rubin.
\newblock \emph{Bayesian data analysis}.
\newblock Chapman and Hall/CRC, 2013.

\bibitem[Ginsbourger et~al.(2014)Ginsbourger, Baccou, Chevalier, Perales,
  Garland, and Monerie]{ginsbourger2014bayesian}
David Ginsbourger, Jean Baccou, Cl{\'e}ment Chevalier, Fr{\'e}d{\'e}ric
  Perales, Nicolas Garland, and Yann Monerie.
\newblock Bayesian adaptive reconstruction of profile optima and optimizers.
\newblock \emph{SIAM/ASA Journal on Uncertainty Quantification}, 2\penalty0
  (1):\penalty0 490--510, 2014.

\bibitem[Han et~al.(2020)Han, Zhou, Zhou, Blanchet, Glynn, and
  Ye]{han2020sequential}
Yanjun Han, Zhengqing Zhou, Zhengyuan Zhou, Jose Blanchet, Peter~W Glynn, and
  Yinyu Ye.
\newblock Sequential batch learning in finite-action linear contextual bandits.
\newblock \emph{arXiv preprint arXiv:2004.06321}, 2020.

\bibitem[Heckerman et~al.(1999)Heckerman, Meek, and
  Cooper]{heckerman1999bayesian}
David Heckerman, Christopher Meek, and Gregory Cooper.
\newblock A bayesian approach to causal discovery.
\newblock \emph{Computation, causation, and discovery}, 19:\penalty0 141--166,
  1999.

\bibitem[Hennig and Schuler(2012)]{hennig2012entropy}
Philipp Hennig and Christian~J Schuler.
\newblock Entropy search for information-efficient global optimization.
\newblock \emph{Journal of Machine Learning Research}, 13\penalty0
  (Jun):\penalty0 1809--1837, 2012.

\bibitem[Hern{\'a}ndez-Lobato et~al.(2014)Hern{\'a}ndez-Lobato, Hoffman, and
  Ghahramani]{hernandez2014predictive}
Jos{\'e}~Miguel Hern{\'a}ndez-Lobato, Matthew~W Hoffman, and Zoubin Ghahramani.
\newblock Predictive entropy search for efficient global optimization of
  black-box functions.
\newblock In \emph{Advances in neural information processing systems}, pages
  918--926, 2014.

\bibitem[Ivanova et~al.(2021)Ivanova, Foster, Kleinegesse, Gutmann, and
  Rainforth]{ivanova2021implicit}
Desi~R Ivanova, Adam Foster, Steven Kleinegesse, Michael Gutmann, and Tom
  Rainforth.
\newblock {Implicit Deep Adaptive Design: Policy–Based Experimental Design
  without Likelihoods}.
\newblock In \emph{Advances in Neural Information Processing Systems},
  volume~34, pages 25785--25798. Curran Associates, Inc., 2021.
\newblock URL
  \url{https://proceedings.neurips.cc/paper/2021/file/d811406316b669ad3d370d78b51b1d2e-Paper.pdf}.

\bibitem[Jang et~al.(2016)Jang, Gu, and Poole]{jang2016categorical}
Eric Jang, Shixiang Gu, and Ben Poole.
\newblock Categorical reparameterization with gumbel-softmax.
\newblock \emph{arXiv preprint arXiv:1611.01144}, 2016.

\bibitem[Joyce(1999)]{joyce1999foundations}
James~M Joyce.
\newblock \emph{The foundations of causal decision theory}.
\newblock Cambridge University Press, 1999.

\bibitem[Kingma and Ba(2014)]{kingma2014adam}
Diederik~P Kingma and Jimmy Ba.
\newblock Adam: {A} method for stochastic optimization.
\newblock \emph{arXiv preprint arXiv:1412.6980}, 2014.

\bibitem[Kleinegesse and Gutmann(2020)]{kleinegesse2020minebed}
Steven Kleinegesse and Michael Gutmann.
\newblock {B}ayesian experimental design for implicit models by mutual
  information neural estimation.
\newblock In \emph{Proceedings of the 37th International Conference on Machine
  Learning}, Proceedings of Machine Learning Research, pages 5316--5326. PMLR,
  2020.

\bibitem[Kohavi and Longbotham(2017)]{kohavi2017online}
Ron Kohavi and Roger Longbotham.
\newblock Online controlled experiments and a/b testing.
\newblock \emph{Encyclopedia of machine learning and data mining}, 7\penalty0
  (8):\penalty0 922--929, 2017.

\bibitem[Krause and Ong(2011)]{krause2011contextual}
Andreas Krause and Cheng Ong.
\newblock Contextual gaussian process bandit optimization.
\newblock \emph{Advances in neural information processing systems}, 24, 2011.

\bibitem[Lindley(1956)]{lindley1956}
Dennis~V Lindley.
\newblock On a measure of the information provided by an experiment.
\newblock \emph{The Annals of Mathematical Statistics}, pages 986--1005, 1956.

\bibitem[Maddison et~al.(2016)Maddison, Mnih, and Teh]{maddison2016concrete}
Chris~J Maddison, Andriy Mnih, and Yee~Whye Teh.
\newblock The concrete distribution: A continuous relaxation of discrete random
  variables.
\newblock \emph{arXiv preprint arXiv:1611.00712}, 2016.

\bibitem[Mohamed et~al.(2020)Mohamed, Rosca, Figurnov, and
  Mnih]{mohamed2020monte}
Shakir Mohamed, Mihaela Rosca, Michael Figurnov, and Andriy Mnih.
\newblock Monte carlo gradient estimation in machine learning.
\newblock \emph{Journal of Machine Learning Research}, 21\penalty0
  (132):\penalty0 1--62, 2020.

\bibitem[Pearce et~al.(2020)Pearce, Klaise, and Groves]{pearce2020practical}
Michael Pearce, Janis Klaise, and Matthew Groves.
\newblock Practical bayesian optimization of objectives with conditioning
  variables.
\newblock \emph{arXiv preprint arXiv:2002.09996}, 2020.

\bibitem[Pearl(2009)]{pearl2009causal}
Judea Pearl.
\newblock Causal inference in statistics: An overview.
\newblock \emph{Statistics surveys}, 3:\penalty0 96--146, 2009.

\bibitem[Poole et~al.(2019)Poole, Ozair, van~den Oord, Alemi, and
  Tucker]{poole2018variational}
Ben Poole, Sherjil Ozair, A{\"a}ron van~den Oord, Alex Alemi, and George
  Tucker.
\newblock On variational bounds of mutual information.
\newblock In \emph{International Conference on Machine Learning}, pages
  5171--5180, 2019.

\bibitem[Rainforth et~al.(2018)Rainforth, Cornish, Yang, Warrington, and
  Wood]{rainforth2018nesting}
Tom Rainforth, Rob Cornish, Hongseok Yang, Andrew Warrington, and Frank Wood.
\newblock On nesting monte carlo estimators.
\newblock In \emph{International Conference on Machine Learning}, pages
  4267--4276. PMLR, 2018.

\bibitem[Robbins and Monro(1951)]{robbins1951stochastic}
Herbert Robbins and Sutton Monro.
\newblock A stochastic approximation method.
\newblock \emph{The annals of mathematical statistics}, pages 400--407, 1951.

\bibitem[Sharma et~al.(2021)Sharma, Syrgkanis, Zhang, and
  K{\i}c{\i}man]{sharma2021dowhy}
Amit Sharma, Vasilis Syrgkanis, Cheng Zhang, and Emre K{\i}c{\i}man.
\newblock Dowhy: Addressing challenges in expressing and validating causal
  assumptions.
\newblock \emph{arXiv preprint arXiv:2108.13518}, 2021.

\bibitem[van~den Oord et~al.(2018)van~den Oord, Li, and
  Vinyals]{oord2018representation}
A{\"a}ron van~den Oord, Yazhe Li, and Oriol Vinyals.
\newblock Representation learning with contrastive predictive coding.
\newblock \emph{arXiv preprint arXiv:1807.03748}, 2018.

\bibitem[Wang and Jegelka(2017)]{wang2017max}
Zi~Wang and Stefanie Jegelka.
\newblock Max-value entropy search for efficient bayesian optimization.
\newblock In \emph{International Conference on Machine Learning}, pages
  3627--3635. PMLR, 2017.

\bibitem[Zanette et~al.(2021)Zanette, Dong, Lee, and
  Brunskill]{zanette2021design}
Andrea Zanette, Kefan Dong, Jonathan~N Lee, and Emma Brunskill.
\newblock Design of experiments for stochastic contextual linear bandits.
\newblock \emph{Advances in Neural Information Processing Systems},
  34:\penalty0 22720--22731, 2021.

\end{thebibliography}

\newpage
\appendix
\section{Method}\label{sec:appendix_proofs}

\begin{algorithm}[t]
\SetAlgoNoLine
\SetKwInput{Input}{Input}
\SetKwInput{Output}{Output}
\Input{Bayesian simulator $p(y, m^* | \Do{\rva},\rvc, \rvc^*, \psi )$, 
batch size $D$ of experimental contexts $\rvC$, 
batch size $D^*$ of evaluation contexts $\rvC^*$,
initial $\rvA$, initial $U_\phi$,
number of contrastive samples $L \geq 1$}
\Output{ Optimal treatments $\rvA$ for experimental contexts $\rvC$ to be tested}

\textbf{Training stage:} 

\Indp

\While{\textnormal{Computational training budget not exceeded}}{
Sample $\psi \sim p(\psi)$

Sample $\rvy \sim  p(\rvy | \rvC, \Do{\rvA}, \psi) $ and $\rvm^* \sim p(\rvm^* | \rvC^*, \psi)$
    
Compute $\mathcal{L} (\rvA, U_\phi; L)$ \eqref{eq:infonce_bound} and update the parameters $(\rvA, \phi)$ using a SGA
}
\Indm
\textbf{Deployment stage:}

\Indp
Run a batch of experiments $\rvA$ to obtain real-world observations $\rvy$ and estimate a posterior $p(\psi |\mathcal{D})$, $\mathcal{D} = \{\rvy, \rvA, \rvC\}$ .

Use the updated model to get an estimate of $\rvm^*$ and evaluate the regret~\eqref{eq:regret} associated with some (past) treatment assignment in contexts $\rvC^*$.

\caption{Data-efficient regret evaluation}
\label{algo:method}
\end{algorithm}

For completeness we formalise the claim made in Section~\ref{sec:training_stage} in Proposition \ref{prop:prop1}, and provide a proof, following standard arguments.
\begin{proposition}[InfoNCE lower bound] 
	\label{thm:tractable_bound}
For any critic function $U\!:\! \mathcal{Y}  \times  \R \!\rightarrow\! \R$ and number of contrastive samples $L \geq 1$ we have $\mathcal{L}(\rvA; \rvC, \rvC^*, L) \leq \mathcal{I}(\rvA; \rvC, \rvC^*)$, where 
\begin{equation}
\hspace{-10pt}
    \mathcal{L}(\rvA, U; \rvC, \rvC^*, L) = \E_{p(\psi)p(\rvy, \rvm^*_0 | \rvC, \rvC^*, \Do{\rvA}, \psi)p(\rvm^*_{1:L},\psi_{1:L}| \rvC^*)} 
    \left[ 
        \log \frac{  \exp(U(\rvy , \rvm^*_0))}{ \frac{1}{L+1} \sum_\ell \exp(U(\rvy , \rvm^*_\ell)) }
    \right] 
\end{equation}
The bound is tight for the optimal critic $U^*(\rvy , \rvm^*) = \log p(\rvy |\rvm^*, \Do{\rvA},  \rvC) + c(\rvy)$ as $L \rightarrow \infty$, where $c(\rvy)$ is an arbitrary function depending only on the outcomes $\rvy$.
\label{prop:prop1}
\end{proposition}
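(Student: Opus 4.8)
The plan is to recognize this as the standard InfoNCE / NWJ-type mutual information bound \citep{oord2018representation, poole2019variational}, adapted to our setting where the ``positive'' variable is the max-value $\rvm^*$ and the ``anchor'' is the experimental outcome $\rvy$. The key structural fact to exploit is that, conditional on $\psi$, the outcome $\rvy$ and the max-value $\rvm^*$ are generated, but marginally (over $\psi$) they are dependent precisely through $\psi$; the contrastive samples $\rvm^*_{1:L}$ with their own $\psi_{1:L}$ are drawn from the correct marginal $p(\rvm^*|\rvC^*)$ and are independent of $\rvy$. So the $L+1$ tuples $(\rvy,\rvm^*_0),(\rvy,\rvm^*_1),\dots,(\rvy,\rvm^*_L)$ are exchangeable-up-to-the-index-of-the-true-pair in the usual InfoNCE sense.

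First I would write the InfoNCE ratio with the optimal critic substituted in and show that, in the limit $L\to\infty$, the denominator $\frac{1}{L+1}\sum_\ell \exp(U^*(\rvy,\rvm^*_\ell))$ converges (by the law of large numbers, since $\rvm^*_{1:L}$ are i.i.d.\ from $p(\rvm^*|\rvC^*)$) to $\E_{p(\rvm^*|\rvC^*)}[\exp(U^*(\rvy,\rvm^*))] = \E_{p(\rvm^*|\rvC^*)}[p(\rvy|\rvm^*,\Do{\rvA},\rvC)]e^{c(\rvy)} = p(\rvy|\Do{\rvA},\rvC)e^{c(\rvy)}$; the numerator is $p(\rvy|\rvm^*_0,\Do{\rvA},\rvC)e^{c(\rvy)}$, the $e^{c(\rvy)}$ cancels, and the integrand becomes exactly the log-density-ratio in \eqref{eq:mi_objective}, giving tightness. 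This handles the ``tight for the optimal critic as $L\to\infty$'' clause.

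For the inequality itself (any $U$, any $L\ge 1$), the clean route is the standard argument: rewrite $\mathcal{L}$ by noting the symmetry of the denominator under permuting which of the $L+1$ samples is the ``true'' one drawn jointly with $\rvy$, so that $\mathcal{L}$ equals an average over the $L+1$ roles; then the difference $\mathcal{I} - \mathcal{L}$ can be written as an expectation of a KL divergence between the true ``which index is the positive'' posterior and the softmax induced by $U$, which is nonnegative. Equivalently — and perhaps more transparent to write — bound $\mathcal{L}$ above by first taking the critic to be optimal (using a pointwise Gibbs/variational inequality that the optimal critic maximises the InfoNCE objective, since $\mathcal{L}$ is concave in the critic evaluations and the stationary condition recovers $U^*$), and then bound the resulting quantity (the finite-$L$ InfoNCE with optimal critic) above by $\mathcal{I}$ using Jensen's inequality on the $-\log$ of the averaged denominator — this is the classic $\mathcal{L}_{\mathrm{NCE}} \le \log(L+1)$ and $\mathcal{L}_{\mathrm{NCE}} \le \mathcal{I}$ bookkeeping. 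I would present the Jensen step explicitly: $\E[-\log \frac{1}{L+1}\sum_\ell \frac{p(\rvy|\rvm^*_\ell)}{p(\rvy)}] \le -\log \E[\cdots] = 0$ after the expectation over the contrastive samples is taken, using that $\E_{p(\rvm^*|\rvC^*)}[p(\rvy|\rvm^*)/p(\rvy)] = 1$ for the $\ell\ne 0$ terms and the $\ell=0$ term contributes the density ratio.

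The main obstacle is bookkeeping rather than any deep idea: one must be careful that the contrastive samples are drawn from the true marginal $p(\rvm^*|\rvC^*) = \E_{p(\psi)}[p(\rvm^*|\rvC^*,\psi)]$ — which is why the statement draws $(\rvm^*_{1:L},\psi_{1:L})$ jointly and then marginalises — and that the $\Do{\rvA}$, $\rvC$, $\rvC^*$ conditioning is carried consistently through every density so that the telescoping of $\mathcal{I}$ against $\mathcal{L}$ is exact. A secondary technical point, which I would mention but not belabour, is the interchange of limit and expectation needed for the $L\to\infty$ tightness claim; this is justified under mild integrability/boundedness assumptions on $U^*$ (e.g.\ dominated convergence, or boundedness of the likelihood ratio), and I would simply state these as the regularity conditions under which the limiting statement holds, mirroring \citet{oord2018representation, poole2019variational}.
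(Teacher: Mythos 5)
Your overall strategy (standard InfoNCE machinery, strong law of large numbers for the $L\to\infty$ tightness of the optimal critic) is the same in spirit as the paper's, and your tightness argument is essentially the paper's own. The genuine gap is in the inequality itself. Your Route A asserts that $\mathcal{I}-\mathcal{L}(U)$ \emph{is} an expectation of a KL divergence between the true ``which index is the positive'' posterior and the softmax induced by $U$; in fact that expected KL equals $\mathcal{L}(U^*)-\mathcal{L}(U)$ (a Gibbs-inequality statement comparing critics), so it only establishes $\mathcal{L}(U)\le\mathcal{L}(U^*)$ and leaves the essential step $\mathcal{L}(U^*)\le\mathcal{I}$ unproven --- and this step is not vacuous at finite $L$, since $\mathcal{L}(U^*)\le\log(L+1)$, which can be strictly below $\mathcal{I}$. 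Your Route B is meant to supply exactly that step, but the explicit Jensen calculation is wrong on two counts: $-\log$ is convex, so Jensen gives $\E[-\log X]\ge-\log\E[X]$, not $\le$ as you wrote; and with $X=\frac{1}{L+1}\sum_\ell p(\rvy|\rvm^*_\ell,\Do{\rvA},\rvC)/p(\rvy|\Do{\rvA},\rvC)$ the expectation of $X$ is \emph{not} one, because the $\ell=0$ sample is drawn jointly with $\rvy$ and $\E\big[p(\rvy|\rvm^*_0,\Do{\rvA},\rvC)/p(\rvy|\Do{\rvA},\rvC)\big]\ge 1$ (typically strictly); you note this term ``contributes the density ratio'' but then still treat the average as having expectation one.

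The missing ingredient is precisely the symmetry/normalisation argument at the heart of the paper's proof. Either follow the paper and write $\mathcal{I}-\mathcal{L}(U)$ as a single KL between the true joint and $p_2=p(\rvm^*_0|\rvC^*)\,p(\rvy|\Do{\rvA},\rvC)\,g(\rvy,\rvm^*_{0:L})\,p(\rvm^*_{1:L}|\rvC^*)$, which is normalised because under $p(\rvy|\Do{\rvA},\rvC)\prod_\ell p(\rvm^*_\ell|\rvC^*)$ the $L+1$ indices are exchangeable and the softmax weights sum to $L+1$; or, to salvage your two-step version, apply Jensen to the \emph{reciprocal} ratio,
\begin{equation*}
\mathcal{I}-\mathcal{L}(U^*) \;=\; \E\!\left[-\log \frac{p(\rvy|\Do{\rvA},\rvC)}{\tfrac{1}{L+1}\sum_\ell p(\rvy|\rvm^*_\ell,\Do{\rvA},\rvC)}\right] \;\ge\; -\log \E\!\left[\frac{p(\rvy|\Do{\rvA},\rvC)}{\tfrac{1}{L+1}\sum_\ell p(\rvy|\rvm^*_\ell,\Do{\rvA},\rvC)}\right] \;=\; 0,
\end{equation*}
where the final equality again uses the exchangeability of the $L+1$ indices to show the bracketed ratio has expectation exactly one under the sampling distribution of $\mathcal{L}$. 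Without this normalisation step neither of your routes closes the bound, so as written the proposal does not prove the proposition.
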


\begin{proof}
Let $U\!:\! \mathcal{Y}  \times  \R \!\rightarrow\! \R$ be  any function (critic) and introduce the shorthand
\begin{align}
    g(\rvy, \rvm^*_{0:L}) \coloneqq \frac{  \exp(U(\rvy , \rvm^*_0))}{ \frac{1}{L+1} \sum_\ell \exp(U(\rvy , \rvm^*_\ell)) }
\end{align}
Multiply the mutual information objective of~Equation~\eqref{eq:mi_objective} by $g(\rvy, \rvm^*_{0:L})/g(\rvy, \rvm^*_{0:L})$ to get

\begin{align}
    \mathcal{I}(\rvA; \rvC, \rvC^*)   &= \E_{\unconditionaljoint} 
    \left[ 
        \log \frac{\unconditionaljoint}{ p(\rvy|  \rvC, \Do{\rvA})p(\rvm^*|\rvC^*) } 
    \right] \\
    & = \E_{p(\psi)p(\rvy, \rvm^*_0 | \rvC, \rvC^*, \Do{\rvA},  \psi)} 
    \left[ 
        \log \frac{\unconditionaljointzero}{ p(\rvy|  \rvC, \Do{\rvA})p(\rvm^*_0|\rvC^*) } 
    \right] \\
    &=\E_{p(\psi)p(\rvy, \rvm^*_0 | \rvC, \rvC^*, \Do{\rvA},  \psi)p(\rvm^*_{1:L},\psi_{1:L}| \rvC^*)} 
    \left[ 
        \log \frac{  p(\rvy |\rvm^*_0, \Do{\rvA},  \rvC) }{ p(\rvy|\Do{\rvA},  \rvC) }
    \right] \\
    &=\E_{p(\psi)p(\rvy, \rvm^*_0 | \rvC, \rvC^*, \Do{\rvA},  \psi)p(\rvm^*_{1:L},\psi_{1:L}| \rvC^*)} 
    \left[ 
        \log \frac{  p(\rvy |\rvm^*_0, \Do{\rvA},  \rvC) g(\rvy, \rvm^*_{0:L}) }{ p(\rvy|\Do{\rvA},  \rvC) g(\rvy, \rvm^*_{0:L}) }
    \right] \\
    \intertext{which we can split into two expectations---one that does not contain the implicit likelihoods and another that is equal to $\mathcal{L}(\rvA, U; \rvC, \rvC^*, L)$}
    \begin{split}
    &= \E_{p(\psi)p(\rvy, \rvm^* | \rvC, \rvC^*, \Do{\rvA},  \psi)p(\rvm^*_{1:L},\psi_{1:L}| \rvC^*)} 
    \left[ 
        \log \frac{  p(\rvy |\rvm^*_0, \Do{\rvA},  \rvC) }{ p(\rvy|\Do{\rvA},  \rvC) g(\rvy, \rvm^*_{0:L}) }
    \right] \\ 
    & \quad + \E_{p(\psi)p(\rvy, \rvm^* | \rvC, \rvC^*, \Do{\rvA},  \psi)p(\rvm^*_{1:L},\psi_{1:L}| \rvC^*)} 
    \left[ 
        \log  g(\rvy, \rvm^*_{0:L}) 
    \right] \\
    &= \text{KL}(p_1\,||\,p_2) + \mathcal{L}(\rvA, U; \rvC, \rvC^*, L) \\
    &\geq \mathcal{L}(\rvA, U; \rvC, \rvC^*, L)
    \end{split} 
\end{align}
where
\begin{align}
\hspace{-20pt}
 \text{KL}(p_1\,||\,p_2) & =  \E_{p(\psi)p(\rvy, \rvm^*_0 | \rvC, \rvC^*, \Do{\rvA},  \psi)p(\psi_{1:L})p(\rvm^*_{1:L}|\psi_{1:L}, \rvC^*)} 
    \left[ 
        \log \frac{  p(\rvy |\rvm^*_0, \Do{\rvA},  \rvC) }{ p(\rvy|\Do{\rvA},  \rvC) g(\rvy, \rvm^*_{0:L}) }
    \right] \\
    & = \E
    \left[ 
        \log \frac{  \unconditionaljointzero p(\rvm^*_{1:L}| \rvC^*)}{p(\rvm^*_{0}|\rvC^*)  p(\rvy|\Do{\rvA},  \rvC) g(\rvy, \rvm^*_{0:L}) p(\rvm^*_{1:L}| \rvC^*)}
    \right] 
\end{align}
where the expectation is with respect to $p(\psi) p(\rvy | \rvC, \Do{\rvA}, \psi)p(\rvm^*_0|\psi, \rvC^*) p(\psi_{1:L}) p(\rvm^*_{1:L}|\psi_{1:L}, \rvC^*)$. This  is a valid KL divergence (since $p_2=p(\rvm^*_{0}|\rvC^*)  p(\rvy|\Do{\rvA},  \rvC) g(\rvy, \rvm^*_{0:L}) p(\rvm^*_{1:L}|\rvC^*)$ integrates to one by symmetry argument) and hence non-negative. 

Finally, substituting $U^*(\rvy , \rvm^*) = \log p(\rvy |\rvm^*, \Do{\rvA},  \rvC) + c(\rvy)$ in the definition of $\mathcal{L}$
\begin{align}
    \mathcal{L}(\rvA, U^*; \rvC, \rvC^*, L) &=  \E_{p(\psi)p(\rvy, \rvm^*_0 | \Do{\rvA},  \rvC, \rvC^*, \psi)p(\rvm^*_{1:L}, \psi_{1:L}| \rvC^*)} 
    \left[ 
        \log \frac{  p(\rvy |\rvm^*, \Do{\rvA},  \rvC)}{ \frac{1}{L+1} \sum_\ell p(\rvy |\rvm^*_\ell, \Do{\rvA},  \rvC) }
    \right] 
\end{align}
which is monotonically increasing in $L$ and tight in the limit as $L \rightarrow \infty$ since 
\begin{equation}
   \frac{1}{L+1} \sum_\ell p(\rvy |\rvm^*_\ell, \Do{\rvA},  \rvC) \rightarrow p(\rvy |\Do{\rvA},  \rvC)~a.s.
\end{equation}
\cite[See e.g.][Theorem~1 for more detail]{foster2020unified}.
\end{proof}

\paragraph{Discrete action space}
Given $K \geq 2$ possible treatments, we learn a (non-deterministic) policy $\pi$ with parameters $\bm{\alpha}$ 
\vspace{-5pt}
\begin{equation}
\vspace{-5pt}
    \pi_{d, i} = \frac{\exp( (\log \alpha_{d, i} + g_{d, i}) / \tau )}{\sum_{j=1}^K \exp( (\log \alpha_{d, j} + g_{d,j}) / \tau )}, \quad i=1,\dots, K, \quad g_{d, i}, \sim \text{Gumbel}(0, 1), \quad \tau > 0,
\end{equation}
where the parameter $\tau$ is called the temperature, which we anneal during optimisation---starting the optimisation at high temperatures, when gradients exhibit low variance, and gradually reducing it towards 0, a which stage gradients are high variance, but we (should) have already learnt good treatments.  
We optimise the parameters $\bm{\alpha}$ and those of the critic network $\phi$ jointly with SGA. Once the policy is trained, the optimal design for experiment $d$ in the batch is $\rva_d = \argmax(\pi_{d, 1}, \dots, \pi_{d, K})$.


\section{Additional Related Work}
\label{sec:app:relwork}
Our setting is related to contextual Bayesian optimisation; some of the more closely related methods include Profile Expected Improvement \citep[PEI,][]{ginsbourger2014bayesian}, Multi-task Thompson Sampling  \citep[MTS,][]{char2019offline} and conditional Bayesian optimization \citep[ConBO,][]{pearce2020practical}. All of these methods are, however, restricted to GPs and the criteria they use to choose optimal designs are not information-based.

Contextual bandits is another broad framework that our work is related to. An extensive line of research is focused on \emph{online linear} bandits and discrete actions chosen using (variations of) UCB, Thompson sampling or $\epsilon$-greedy strategy \citep{auer2002using, chu2011contextual, agrawal2013thompson, han2020sequential}. 
\citet{krause2011contextual} instead model the reward as a GP defined over the context-action space and develop CGP-UCB. More recently, \cite{zanette2021design} proposed designing a batch of experiments \emph{offline} to collect a good dataset from which to learn a policy. 
Although this is similar to our setting, our approach is not restricted to linear rewards and uses an information-theoretic design objective.

Similar variational EIG objectives have been used in implicit likelihood BED methods for \emph{parameter learning}, but not for contextual optimisation. 
Gradient-based methods for large batch experimentation include SG-BOED \citep{foster2020unified} and MINEBED \citep{kleinegesse2020minebed}, while the policy-based iDAD \citep{ivanova2021implicit} applies to batch and adaptive settings. 
Our ability to handle discrete designs is another important distinction of our method. 


\section{Experiments} \label{sec:appendix_experiments}
\subsection{Training details}
We implement all experiments in Pyro \citep{pyro}. All experiments baselines ran for 50K gradient steps, using a batch size of 2048. We used the Adam optimiser \citep{kingma2014adam} with initial learning rate 0.001 and exponential learning rate annealing with coefficient 0.96 applied every 1000 steps. We used a separable critic architecture \citep{poole2018variational} with simple MLP encoders with ReLU activations and 32 output units. 

For the discrete treatment example:  we added \emph{batch norm} to the critic architecture, which helped to stabilise the optimisation. We had one hidden layer of size 512.
Additionally, for the Gumbel--Softmax policy, we started with a temperature $\tau=2.0$ and \texttt{hard=False} constraint.
We applied temperature annealing every 10K steps with a factor 0.5. We switch to \texttt{hard=True} in the last 10K steps of training.

For the continuous treatment example: We used MLPs with hidden layers of sizes $[\text{design dimension}\times 2, 412, 256]$ and 32 output units. 

Note: In order to evaluate the EIG of various baselines, we train a critic network for each one of them with the same hyperparameters as above.

\subsection{Posterior inference details}

After completing the training stage of our method (Algorithm~\ref{algo:method}), we need to deploy the learnt optimal designs in the real world in order to obtain rewards $\rvy$. This experimental data is then used to fit a posterior $p(\psi|\mathcal{D})$. 

\begin{wrapfigure}[16]{r}{0.35\textwidth}
  \centering
  \includegraphics[width=1.0\textwidth]{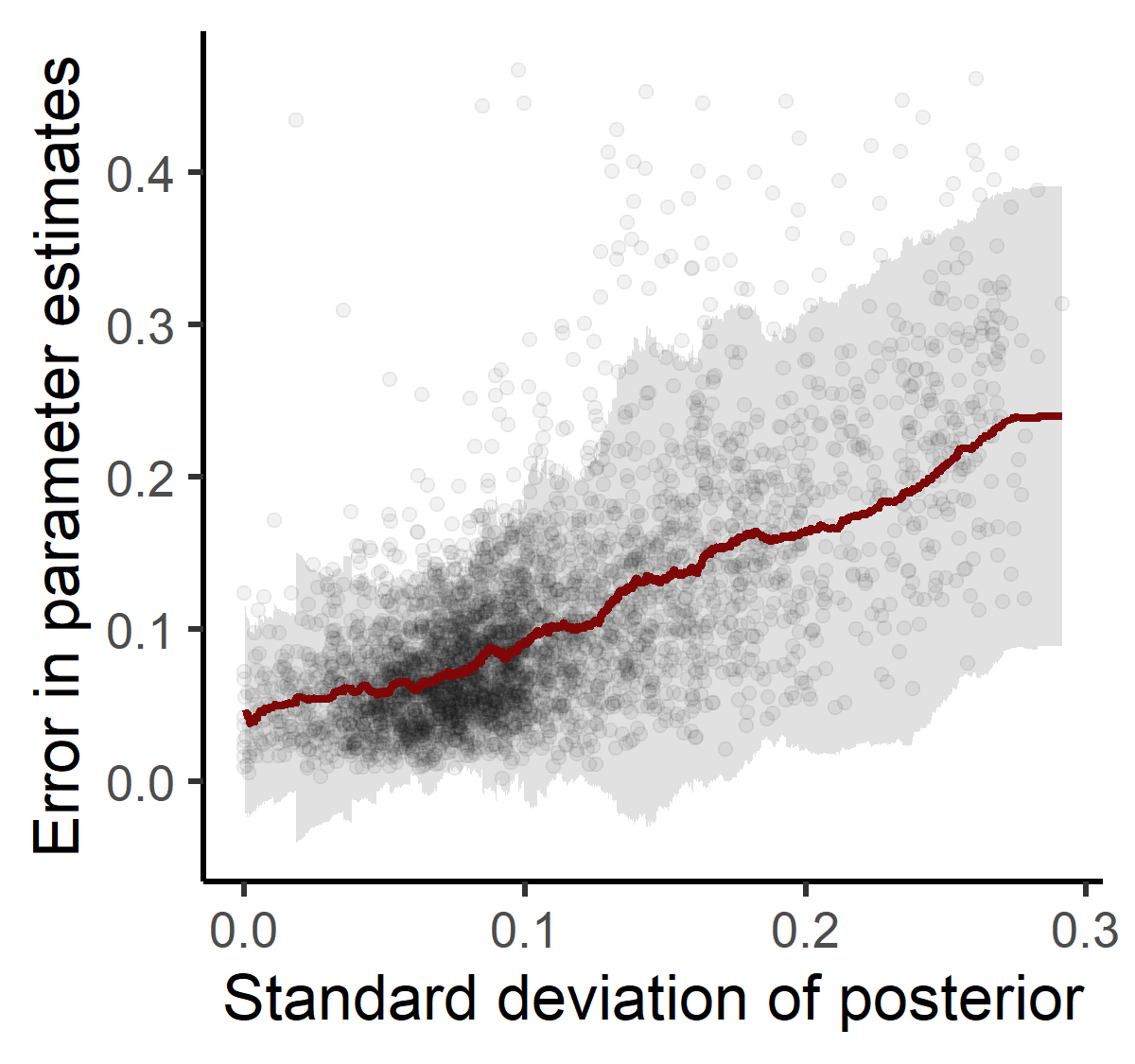}
    \caption{Posterior checks}
    \label{fig:calib}
\end{wrapfigure}
There are many ways to do the posterior inference and quality of the results will crucially depend on the accuracy of the fitted posteriors. In both of our examples and for all baselines we use use Pyro's self-normalised importance sampling (SNIS). Samples from this posterior are used for the evaluation metrics.

We validate the accuracy of estimated posteriors by running various sanity checks, including diagnostic plots such as Figure~\ref{fig:calib}, showing the standard deviation of our posterior mean estimate (measure of how uncertainty about the parameter) and $L_2$ error to the true parameter. The red line shows the rolling mean over 200 points of the latter, and the gray band---the 2 standard deviations. For this plot we used the continuous treatments example with $D=20$ experimental contexts.

\subsection{Evaluation metrics details}
As discussed in the main text, we evaluate how well we can estimate $\rvm^*$ by sampling a ground truth $\tilde{\psi}$ from the prior and obtaining a corresponding ground truth $\tilde{\rvm}^*$. 
We approximate the max-values $\rvm^*$ empirically using 2000 posterior samples of $\psi$ 
We similarly estimate $\psi$ using 2000 posterior samples. 
We define the optimal treatment under the posterior model to be average (with respect to that posterior) optimal treatment when treatments are continuous or $\text{UCB}_0$ when treatments are discrete. 
Finally the regret is computed as the average difference between the true max value (from the true environment and the true optimal treatment) and the one obtained by applying the estimated optimal treatment.  
We used 4000 (resp. 2000) true environment realisation for the continuous (resp. discrete) example. 

\subsection{Discrete treatments example}
\begin{table}[t]
\setlength{\tabcolsep}{6pt}
\renewcommand{\arraystretch}{0.98} 
\center
\small{
\begin{tabular}{lccccc}
Method             & EIG estimate      & MSE$(\rvm^*)$       & MSE$(\psi)$         &  Hit rate$(\rvA)$      & Regret     \\
\midrule
$\text{UCB}_{0.0}$ & 1.735 $\pm$ 0.005 & 2.541 $\pm$ 0.104 & 3.129 $\pm$ 0.054 & 0.513 $\pm$ 0.01 & 1.170 $\pm$ 0.036  \\
$\text{UCB}_{1.0}$ & 2.514 $\pm$ 0.006 & 1.003 $\pm$ 0.043 & 1.521 $\pm$ 0.021 & 0.496 $\pm$ 0.01 & 1.119 $\pm$ 0.035 \\
$\text{UCB}_{2.0}$ & 2.504 $\pm$ 0.006 & 0.965 $\pm$ 0.045 & 1.486 $\pm$ 0.021 & 0.497 $\pm$ 0.01 & 1.169 $\pm$ 0.037 \\
Random             & 3.573 $\pm$ 0.006 & 1.953 $\pm$ 0.070  & 1.347 $\pm$ 0.024 & 0.503 $\pm$ 0.01 & 1.150 $\pm$ 0.036  \\
\midrule
\textbf{Ours}    & \textbf{4.729} $\pm$\textbf{ 0.009} & \textbf{0.594} $\pm$ \textbf{0.025} & 1.326 $\pm$ 0.020  & 0.501 $\pm$ 0.01 & 1.152 $\pm$  0.035 \\
\bottomrule
\end{tabular}
}
\caption{Discrete treatments: evaluation metrics of 10D design. }
\label{tab:ab_oneseed}
\end{table}

\paragraph{Model}
We first give details about the toy model we consider in Figure~\ref{fig:exp_discrete_ab}. Each of the four treatments $\rva = 1, 2, 3, 4$ is a random function with two parameters $\psi_k = (\psi_{k,1}, \psi_{k,2})$ with the following Gaussian priors (parameterised by mean and covariance matrix): 
\begin{align}
    \psi_1 &~ \sim \mathcal{N} \left(\begin{pmatrix} 5.00 \\ 15.0  \end{pmatrix},  
    \begin{pmatrix}
     9.00 & 0 \\ 
     0 & 9.00
    \end{pmatrix} \right) &
    \psi_2 &~ \sim \mathcal{N} \left(\begin{pmatrix} 5.00 \\ 15.0  \end{pmatrix},  
    \begin{pmatrix}
     2.25 & 0 \\ 
     0 & 2.25
    \end{pmatrix} \right) \\
    \psi_3 &~ \sim \mathcal{N} \left(\begin{pmatrix} -2.0 \\ -1.0  \end{pmatrix},  
    \begin{pmatrix}
     1.21 & 0 \\ 
     0 & 1.21
    \end{pmatrix} \right) &
    \psi_4 &~ \sim \mathcal{N} \left(\begin{pmatrix} -7.0 \\ 3.0  \end{pmatrix},  
    \begin{pmatrix}
     1.21 & 0 \\ 
     0 & 1.21
    \end{pmatrix} \right)
\end{align}
and reward (outcome) likelihoods:
\begin{align}
    y | \rvc, \rva, \psi &\sim  \mathcal{N} \left( f(\rvc, \rva, \psi), 0.1 \right) \\
    f(\rvc, \rva, \psi) &= -\rvc^2 + \beta(\rva, \psi)\rvc + \gamma(\rva, \psi) \\
    \gamma &= (\psi_{\rva, 1} + \psi_{\rva, 2} + 18) / 2 \\
    \beta &= (\psi_{\rva, 2} - \gamma + 9) / 3
\end{align}
Intuition about the parameterisation: The first component of each $\psi_i$ defines the mean reward at context $\rvc=-3$, while the second one defines the mean reward at context $\rvc=3$. The reward is then the quadratic equation that passes through those points and leading coefficient equal to $-1$. 

\paragraph{Experimental and evaluation contexts} We use experimental and evaluation contexts of the same sizes. The experimental context, $\rvc$ is an equally spaced grid of size 10 between $-3$ and $-1$. We set the evaluation context $\rvc^* = -\rvc$. Figure~\ref{fig:exp_discrete_ab} in the main text visually illustrates this: the $x$-axis of the points in each plot are the experimental contests, while the dashed gray lines are the evaluation contexts.

\paragraph{Further results}
Table \ref{tab:ab_oneseed} shows all the evaluation metrics for the discrete treatment example from the main text. Our method achieves substantially higher EIG and lower MSE of estimating the max-rewards.  On all other metrics all methods perform similarly. This is to be expected, since Treatments 1 and 2 have exactly the same means and due to the way the model was parameterised (by the value of the quadratic at contexts 3 and -3), the probability of the optimal treatment being 1 or 2 is exactly $50\%$ (the hit rate all baselines achieve).
Note that $\text{UCB}_1$ and $\text{UCB}_2$ achieve statistically identical results, which is expected given they select the same designs.

\paragraph{Training stability}  We perform our method with the same hyperparameters but different training seeds and report the mean and standard error in Table~\ref{tab:ab_stability}.

\begin{table}[t]
\setlength{\tabcolsep}{6pt}
\renewcommand{\arraystretch}{0.98} 
\center
\small{
\begin{tabular}{lccccc}
Method             & EIG estimate      & MSE$(\rvm^*)$     & MSE$(\psi)$       & Hit rate$(\rvA)$  & Regret            \\
\midrule
$\text{UCB}_{0.0}$ & 1.740 $\pm$ 0.003  & 2.709 $\pm$ 0.058 & 3.197 $\pm$ 0.018 & 0.500 $\pm$ 0.005   & 1.150 $\pm$ 0.017  \\
$\text{UCB}_{1.0}$ & 2.508 $\pm$ 0.002 & 0.993 $\pm$ 0.016 & 1.529 $\pm$ 0.007 & 0.498 $\pm$ 0.004 & 1.140 $\pm$ 0.007  \\
$\text{UCB}_{2.0}$ & 2.505 $\pm$ 0.006 & 0.991 $\pm$ 0.023 & 1.518 $\pm$ 0.012 & 0.497 $\pm$ 0.003 & 1.145 $\pm$ 0.015 \\
Random             & 3.573 $\pm$ 0.333 & 2.369 $\pm$ 0.382 & 1.756 $\pm$ 0.269 & 0.502 $\pm$ 0.003 & 1.166 $\pm$ 0.008 \\
\midrule
Ours               & \textbf{4.769} $\pm$ \textbf{0.048} & \textbf{0.628} $\pm$ \textbf{0.025} &\textbf{ 1.369} $\pm$ \textbf{0.014} & 0.502 $\pm$ 0.005 & 1.160 $\pm$ 0.021 \\
\bottomrule
\end{tabular}
}
\caption{Discrete treatments example: 10D design, stability across training seeds}
\label{tab:ab_stability}
\end{table}

\subsection{Continuous treatment example}
\begin{figure}[t]
  \centering
  \includegraphics[width=1.0\textwidth]{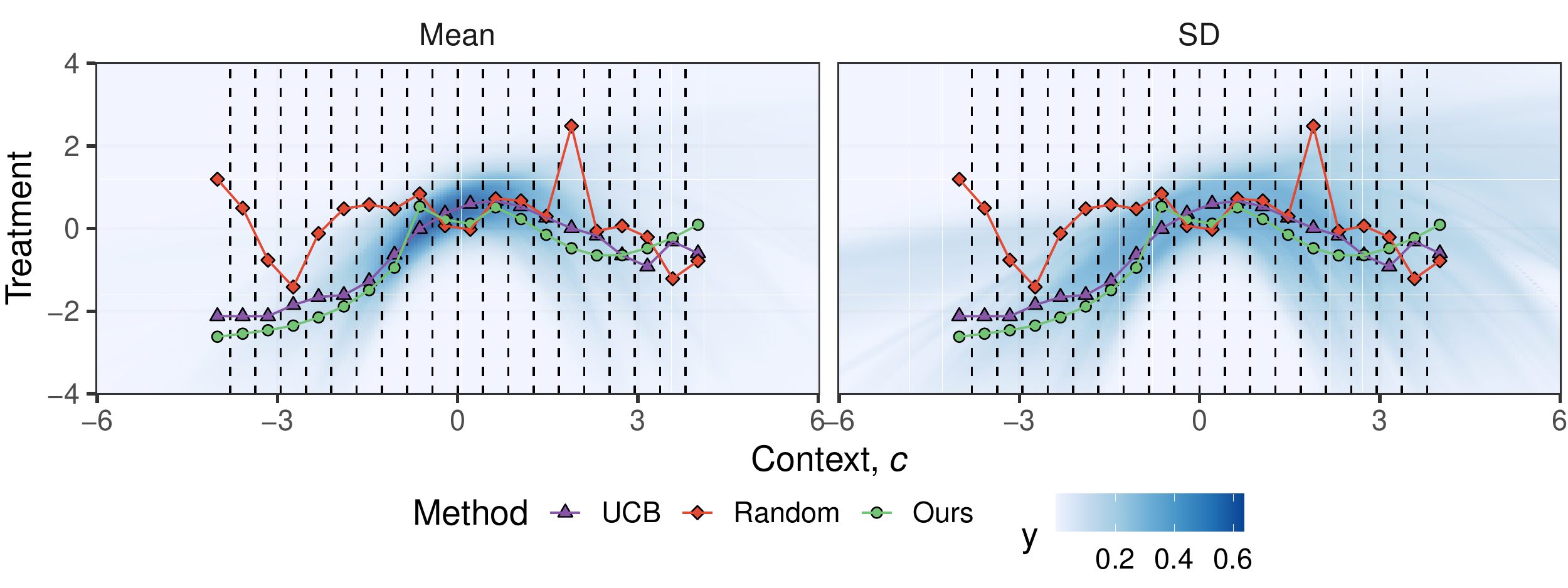}
    \caption{Continuous treatment example: 20D design to learn about 19 evaluation contexts (dashed gray lines). The random baseline samples treatments from $\mathcal{N}(0, 1)$.}
    \label{fig:cts_example_20d}
\end{figure}

\begin{table}[t]
\setlength{\tabcolsep}{3pt}
\renewcommand{\arraystretch}{1} 
\small{
\begin{tabular}{lccccc}
Method             & EIG estimate      & MSE$(\rvm^*)$       & MSE$(\psi)$         & MSE$(\rvA)$       & Regret            \\
\midrule
Random$_{0.2}$     & 5.548 $\pm$ 0.044 & 0.0037 $\pm$ 0.0002 & 0.0101 $\pm$ 0.0008 & 0.451 $\pm$ 0.033 & 0.083 $\pm$ 0.004 \\
Random$_{1.0}$     & 5.654 $\pm$ 0.128 & 0.0031 $\pm$ 0.0004 & 0.0065 $\pm$ 0.0008 & 0.343 $\pm$ 0.044 & 0.069 $\pm$ 0.006 \\
Random$_{2.0}$     & 5.118 $\pm$ 0.163 & 0.0045 $\pm$ 0.0003 & 0.0096 $\pm$ 0.0010  & 0.498 $\pm$ 0.032 & 0.086 $\pm$ 0.004 \\
$\text{UCB}_{0.0}$ & 5.768 $\pm$ 0.002 & 0.0066 $\pm$ 0.0002 & 0.0148 $\pm$ 0.0002 & 0.729 $\pm$ 0.022 & 0.082 $\pm$ 0.001 \\
$\text{UCB}_{1.0}$ & 5.892 $\pm$ 0.006 & 0.0031 $\pm$ 0.0001 & 0.0097 $\pm$ 0.0002 & 0.354 $\pm$ 0.013 & 0.068 $\pm$ 0.001 \\
$\text{UCB}_{2.0}$ & 5.797 $\pm$ 0.004 & 0.0030 $\pm$ 0.0001  & 0.0090 $\pm$ 0.0001  & 0.343 $\pm$ 0.011 & 0.071 $\pm$ 0.001 \\
\midrule
\textbf{Ours}               & \textbf{6.538} $\pm$ \textbf{0.008} & \textbf{0.0013} $\pm$ \textbf{0.0001} & \textbf{0.0038} $\pm$ \textbf{0.0001} & \textbf{0.131} $\pm$ \textbf{0.006} & \textbf{0.042} $\pm$ \textbf{0.0001}    \\
\bottomrule
\end{tabular}
}
\caption{Continuous treatments example: 40D design training stability. Mean and standard error are reported across 6 different training seeds.}
\label{tab:cts_40d_stability}
\end{table}

\paragraph{Model} For the continuous treatment example we use the following model:
\begin{align}
 \text{Prior: }  \quad  &\psi = (\psi_0, \psi_1, \psi_2, \psi_3 ), \quad  \psi_i \sim \text{Uniform}[0.1, 1.1] \; \text{iid} \\
 \text{Likelihood: } \quad &y  |\rvc, \rva, \psi \sim \mathcal{N} (f(\psi, \rva, \rvc), \sigma^2),
\end{align}
where
\begin{equation}
    f(\psi, \rva, \rvc) = \exp\left( -\frac{\big(a - g(\psi, \rvc)\big)^2}{h(\psi, \rvc)} \right) \quad
    g(\psi, \rvc) = \psi_0 + \psi_1 \rvc + \psi_2 \rvc^2 \quad
    h(\psi, c) = \psi_3
\end{equation}

\paragraph{Experimental and evaluation contexts} The experimental context, $\rvc$ is an equally spaced grid of size $D=40$ (or $20$ or $60$ in Further Results below) between $-3.5$ and and $3.5$. The evaluation context $\rvc^*$ is of size $D^* = D - 1$ and consists of the midpoints of the experimental context (see Figure~\ref{fig:cts_example_20d} for an illustration). 

\paragraph{Baselines} Since we have a continuous treatment, for the random baseline we consider sampling designs at random from $\mathcal{N}(0, 0.2)$, $\mathcal{N}(0, 1)$ or $\mathcal{N}(0, 2)$, which we denote by $\text{Random}_0.2$, $\text{Random}_1$ and $\text{Random}_2$, respectively.

\paragraph{Training stability} We perform our method with the same hyperparameters but different training seeds and report the mean and standard error in Table~\ref{tab:cts_40d_stability}.

\paragraph{Further results} We report the results of the same experiment, but  with a smaller and larger batch sizes of experimental and evaluation contexts. Table~\ref{tab:cts_20d} shows results for an experimental batch size of 20 contexts to learn about 19 evaluation contexts, while  Figure~\ref{fig:cts_example_20d} visually illustrates the model and the designs. Finally, Table~\ref{tab:cts_60d} shows results for an experimental batch size of 60 contexts to learn about 59 evaluation contexts.

\begin{table}[t]
\setlength{\tabcolsep}{3pt}
\renewcommand{\arraystretch}{1} 
\small{
\begin{tabular}{lccccc}
Method             & EIG estimate      & MSE$(\rvm^*)$       & MSE$(\psi)$         & MSE$(\rvA)$       & Regret    \\
\midrule
Random$_{0.2}$     & 4.262 $\pm$ 0.004 & 0.0086 $\pm$ 0.0003 & 0.0176 $\pm$ 0.0004   & 1.046 $\pm$ 0.041  & 0.120 $\pm$ 0.002 \\
Random$_{1.0}$     & 4.264 $\pm$ 0.004 & 0.0068 $\pm$ 0.0003 & 0.0158 $\pm$ 0.0004  & 0.799 $\pm$ 0.033& 0.114 $\pm$ 0.002 \\
Random$_{2.0}$     & 4.116 $\pm$ 0.003 & 0.0083 $\pm$ 0.0003 & 0.0198 $\pm$ 0.0005  & 1.002 $\pm$ 0.044& 0.127 $\pm$ 0.003 \\
$\text{UCB}_{0.0}$ & 5.093 $\pm$ 0.004 & 0.0074 $\pm$ 0.0004 & 0.0186 $\pm$ 0.0006  & 0.800 $\pm$ 0.047& 0.097 $\pm$ 0.002   \\
$\text{UCB}_{1.0}$ & 5.040 $\pm$ 0.004  & 0.0072 $\pm$ 0.0004 & 0.0180 $\pm$ 0.0006   & 0.764 $\pm$ 0.041& 0.097 $\pm$ 0.002 \\
$\text{UCB}_{2.0}$ & 5.038 $\pm$ 0.004 & 0.0048 $\pm$ 0.0003 & 0.0127 $\pm$ 0.0004  & 0.573 $\pm$ 0.033& 0.086 $\pm$ 0.002 \\
\midrule
\textbf{Ours}               & \textbf{5.642} $\pm$ \textbf{0.003} & \textbf{0.0034 } $\pm$ \textbf{0.0002} &\textbf{ 0.0073} $\pm$ \textbf{0.0003} & \textbf{0.065} $\pm$ \textbf{0.002} & \textbf{0.344} $\pm$ \textbf{0.027 } \\
\bottomrule
\end{tabular}
}
\caption{Continuous treatment example: 20D design to learn about 19 evaluation contexts.}
\label{tab:cts_20d}
\end{table}

\begin{table}[t]
\setlength{\tabcolsep}{3pt}
\renewcommand{\arraystretch}{0.9} 
\small{
\begin{tabular}{lccccc}
Method             & EIG estimate      & MSE$(\rvm^*)$       & MSE$(\psi)$         & MSE$(\rvA)$       & Regret    \\
\midrule
Random$_{0.2}$     & 6.033 $\pm$ 0.003 & 0.0026 $\pm$ 0.0001 & 0.0068 $\pm$ 0.0002 & 0.307 $\pm$ 0.019 & 0.068 $\pm$ 0.002 \\
Random$_{1.0}$     & 5.877 $\pm$ 0.004 & 0.0025 $\pm$ 0.0002 & 0.0058 $\pm$ 0.0002 & 0.310 $\pm$ 0.023  & 0.064 $\pm$ 0.002 \\
Random$_{2.0}$     & 6.153 $\pm$ 0.003 & 0.0022 $\pm$ 0.0002 & 0.0046 $\pm$ 0.0002 & 0.226 $\pm$ 0.019 & 0.055 $\pm$ 0.002 \\
$\text{UCB}_{0.0}$ & 6.106 $\pm$ 0.003 & 0.0056 $\pm$ 0.0004 & 0.0134 $\pm$ 0.0006 & 0.586 $\pm$ 0.045 & 0.077 $\pm$ 0.002 \\
$\text{UCB}_{1.0}$ & 6.200 $\pm$ 0.003   & 0.0027 $\pm$ 0.0003 & 0.0086 $\pm$ 0.0003 & 0.305 $\pm$ 0.028 & 0.063 $\pm$ 0.002 \\
$\text{UCB}_{2.0}$ & 6.234 $\pm$ 0.003 & 0.0024 $\pm$ 0.0002 & 0.0069 $\pm$ 0.0002 & 0.252 $\pm$ 0.024 & 0.064 $\pm$ 0.002 \\
\midrule
\textbf{Ours}     & \textbf{6.932} $\pm$ \textbf{0.003} & \textbf{0.0007} $\pm$ \textbf{0.0001} &\textbf{ 0.0027} $\pm$ \textbf{0.0001} & \textbf{0.069} $\pm$ \textbf{0.009} &\textbf{ 0.033} $\pm$ \textbf{0.001} \\
\bottomrule
\end{tabular}
}
\caption{Continuous treatment example: 60D design to learn about 59 evaluation contexts.}
\label{tab:cts_60d}
\end{table}

\end{document}